\DeclareMathOperator{\E}{\mathbb{E}}
\DeclareMathOperator{\cC}{\mathcal{C}}
\newcommand{\dop}{\mathit{do}}
\newcommand{\PA}{\text{PA}}
\newcommand{\veryshortrightarrow}{\!\!\shortrightarrow\!\!}
\newcommandx{\warning}[2][1=]{\todo[linecolor=red,backgroundcolor=red!25,bordercolor=red,#1]{#2}}
\newcommandx{\mynote}[2][1=]{\todo[linecolor=blue,backgroundcolor=blue!25,bordercolor=blue,#1]{#2}}
\newcommandx{\myquestion}[2][1=]{\todo[linecolor=purple,backgroundcolor=purple!25,bordercolor=purple,#1]{#2}}
\newcommandx{\original}[2][1=]{\todo[linecolor=green,backgroundcolor=green!25,bordercolor=green,#1]{OG!: #2}}
\begin{document}
\title{Causal Entropy and Information Gain for Measuring Causal Control}

\author{Francisco Nunes~Ferreira~Quialheiro~Simoes\orcidlink{0009-0004-9897-6446} \and
Mehdi Dastani\orcidlink{0000-0002-4641-4087} \and
Thijs van Ommen\orcidlink{0000-0001-7593-6258}}

\authorrunning{F. Simoes et al.}
\institute{Department of Information and Computing Sciences, Utrecht University\\
  \email{\{f.simoes,m.m.dastani,t.vanommen\}@uu.nl}
}
\maketitle              %

\begin{abstract}
  Artificial intelligence models and methods commonly lack causal interpretability.
  Despite the advancements in interpretable machine learning (IML) methods, they frequently assign importance to features which lack causal influence on the outcome variable.
  Selecting causally relevant features among those identified as relevant by these methods, or even before model training, would offer a solution.
  Feature selection methods utilizing information theoretical quantities have been successful in identifying statistically relevant features.
  However, the information theoretical quantities they are based on do not incorporate causality, rendering them unsuitable for such scenarios.
  To address this challenge, this article proposes information theoretical quantities that incorporate the causal structure of the system, which can be used to evaluate causal importance of features for some given outcome variable.
  Specifically, we introduce causal versions of entropy and mutual information, termed causal entropy and causal information gain, which are designed to assess how much control a feature provides over the outcome variable.
  These newly defined quantities capture changes in the entropy of a variable resulting from interventions on other variables.
  Fundamental results connecting these quantities to the existence of causal effects are derived.
  The use of causal information gain in feature selection is demonstrated, highlighting its superiority over standard mutual information in revealing which features provide control over a chosen outcome variable.
  Our investigation paves the way for the development of methods with improved interpretability in domains involving causation.
  \keywords{
    Causal Inference \and
    Information Theory \and
    Interpretable Machine Learning \and
    Explainable Artificial Intelligence
    }
\end{abstract}

\section{Introduction}
\label{sec:introduction}

Causality plays an important role in enhancing not only the prediction power of a model \cite{scholkopf2012causal} but also its interpretability \cite{confalonieri2021historical}.
Causal explanations are more appropriate for human understanding than purely statistical explanations \cite{miller2019explanation}.
Accordingly, comprehending the causal connections between the variables of a system can enhance the interpretability of interpretable machine learning (IML) methods themselves.

Interpretable models such as linear regression or decision trees do not, despite their name, always lend themselves to \emph{causal} interpretations.
To illustrate this point, consider running multilinear regression on the predictors $X_{1}, X_{2}$ and outcome $Y$ within a system whose variables are causally related as depicted in the graph of \Cref{fig:example-scm}.
The regression coefficients $\beta_{1}$ and $\beta_{2}$ of $X_{1}$ and $X_{2}$ might yield large values, which may be (and are often in practice) interpreted as suggesting a causal relationship.
However, a causal interpretation of $\beta_{1}$ would not be appropriate.
Although $X_{1}$ might provide predictive power over $Y$, this does not imply a causal relationship, since this predictive power is due to the confounder $W$.
Consequently, intervening on $X_{1}$ would not impact the outcome $Y$.

In current model-agnostic methods, a causal interpretation is often desirable but rarely possible.
In partial dependence plots (PDPs) \cite{friedman2001pdp}, the partial dependence of a model outcome $\hat{Y}$ on a variable $X_{i}$ coincides with the backdoor criterion formula \cite{pearl2016primer} when the conditioning set encompasses all the other covariates $X_{j\ne i}$ \cite{zhao2019causal}.
Consequently, there is a risk of disregarding statistical dependence or, conversely, finding spurious dependence, by conditioning on causal descendants of $X_{i}$ \cite{zhao2019causal}.
Therefore, PDPs (along with the closely related individual conditional expectation (ICE) lines \cite{goldstein2015peeking}) generally lack a causal interpretation.
Similarly, when utilizing (Local Interpretable Model-Agnostic Explanations) LIME \cite{ribeiro2016lime} to evaluate the importance of a feature for an individual, a causal interpretation cannot be guaranteed.
LIME fits a local model around the point of interest and assesses which features, when perturbed, would cause the point to cross the decision boundary of the model.
However, intervening on a feature in such a way as to cross the model's decision boundary does not guarantee an actual change in the outcome in reality.
This is because the model was trained on observational data, and that feature may merely be correlated with the outcome through a confounding factor, for example, rather than having a causal effect on the outcome.

In both cases just described, it is the presence of confounders, selection bias, or an incorrect direction of causality seemingly implied by the model that can lead to misleading predictions and interpretations.
We need a way to select which features are causally relevant --- \emph{i.e.} give us control over the chosen outcome variable.
Information theoretical quantities such as mutual information are often used to assess the relevance of a feature with respect to a given outcome variable \cite{vergara2014review,beraha2019feature,zhou2022feature}, but this relevance is still purely statistical.
This is a common issue when using standard information theoretical quantities in situations that require consideration of the underlying causal relationships.
A version of mutual information which takes into account the causal structure of the system would solve this problem.
This is what we set out to develop in this work.

In our research, we extend traditional conditional entropy and mutual information to the realm of \emph{interventions}, as opposed to simple conditioning.
This extension drew inspiration from the conceptual and philosophical work presented in\footnotemark \cite{griffiths2015specificity}.
\footnotetext{The reader is referred to \Cref{sec:related-work} for a detailed discussion about this.}
We dub these constructs ``causal entropy'' and ``causal information gain''.
They are designed to capture changes in the entropy of a given variable in response to manipulations affecting other variables.
We derive fundamental results connecting these quantities to the presence of causal effect.
We end by illustrating the use of causal information gain in selecting a variable which allows us to control an outcome variable, and contrast it with standard mutual information.

The novelty of our work consists of providing rigorous definitions for causal entropy and causal information gain, as well as deriving some of their key properties for the first time.
These contributions set the foundations for the development of methods which correctly identify features which provide causal control over an outcome variable.

This paper is organized as follows. In \Cref{sec:formal-setting}, we introduce the definitions of quantities from the fields of causal inference and information theory that will be used throughout the rest of the paper.
\Cref{sec:example} includes a simple example of a structural causal model where standard entropy and mutual information are inadequate for obtaining the desired causal insights.
In \Cref{sec:causal-entr}, we define causal entropy and explore its relation to total effect.
\Cref{sec:causal-inf-gain} discusses the definition of causal information gain and investigates its connection with causal effect.
Furthermore, it revisits the example from \Cref{sec:example}, showing that causal entropy and causal information gain allow us to arrive at the correct conclusions about causal control.
In \Cref{sec:related-work}, we compare the definitions and results presented in this paper with those of previous work.
Finally, in \Cref{sec:disc-concl}, we discuss the obtained results and propose future research directions.

\section{Formal Setting}
\label{sec:formal-setting}
In this section we present the definitions from causal inference and information theory which are necessary for the rest of this paper.
All random variables are henceforth assumed to be discrete and have finite range.

\subsection{Structural Causal Models}
\label{sec:struct-caus-models}

One can model the causal structure of a system by means of a ``structural causal model'', which can be seen as a Bayesian network \cite{koller2009probabilistic} whose graph $G$ has a causal interpretation and each conditional probability distribution (CPD) $P(X_{i} \mid \PA_{X_{i}})$ of the Bayesian network stems from a deterministic function $f_{X_{i}}$ (called ``structural assignment'') of the parents of $X_{i}$.
In this context, it is common to separate the parent-less random variables (which are called ``exogenous'' or ``noise'' variables) from the rest (called ``endogenous'' variables).
Only the endogenous variables are represented in the structural causal model graph.
As is commonly done \cite{peters2017elements}, we assume that the noise variables are jointly independent and that exactly one noise variable $N_{X_{i}}$ appears as an argument in the structural assignment $f_{X_{i}}$ of $X_{i}$.
In full rigor\footnotemark \cite{peters2017elements}:

\footnotetext{We slightly rephrase the definition provided in \cite{peters2017elements} to enhance its clarity. \label{fn:def}}

\begin{definition}[Structural Causal Model]
  \label{def:scm}
  Let $X$ be a random variable with range $R_{X}$ and $\mathbf{W}$ a random vector with range $R_{\mathbf{W}}$.
  A \emph{structural assignment for $X$ from $\mathbf{W}$} is a function $f_{X}\colon R_{\mathbf{W}} \to R_{X}$.
  A \emph{structural causal model (SCM)} $\mathcal{C} = (\mathbf{X}, \mathbf{N}, S, p_{\mathbf{N}})$ consists of:
  \begin{enumerate}
    \item A random vector $\mathbf{X} = (X_{1}, \ldots, X_{n})$ whose variables we call \emph{endogenous}.
    \item A random vector $\mathbf{N} = (N_{X_{1}}, \ldots, N_{X_{n}})$ whose variables we call \emph{exogenous} or \emph{noise}.
    \item A set $S$ of $n$ structural assignments $f_{X_{i}}$ for $X_{i}$ from ($\PA_{X_{i}}, N_{X_{i}}$), where $\PA_{X_{i}} \subseteq \mathbf{X}$ are called \emph{parents} of $X_{i}$.
      The \emph{causal graph} $G^{\mathcal{C}}\vcentcolon=(\mathbf{X}, E)$ of $\mathcal{C}$ has as its edge set $E = \{(P, X_{i}) : X_{i} \in \mathbf{X},\  P\in \PA_{X_{i}}\}$.
      The $\PA_{X_{i}}$ must be such that the $G^{\mathcal{C}}$ is a directed acyclic graph (DAG).
    \item A jointly independent probability distribution $p_{\mathbf{N}}$ over the noise variables. We call it simply the \emph{noise distribution}.
  \end{enumerate}
\end{definition}

  We denote by $\cC(\mathbf{X})$ the set of SCMs with vector of endogenous variables $\mathbf{X}$.
  Furthermore, we write $X \vcentcolon= f_{X}(X, N_{X})$ to mean that $f_{X}(X, N_{X})$ is a structural assignment for $X$.

  Notice that for a given SCM the noise variables have a known distribution $p_{\mathbf{N}}$ and the endogenous variables can be written as functions of the noise variables.
  Therefore the distributions of the endogenous variables are themselves determined if one fixes the SCM.
  This brings us to the notion of the entailed distribution\footref{fn:def} \cite{peters2017elements}:

\begin{definition}[Entailed distribution]
  Let $\mathcal{C} = (\mathbf{X}, \mathbf{N}, S, p_{\mathbf{N}})$ be an SCM. Its \emph{entailed distribution} $p^{\mathcal{C}}_{\mathbf{X}}$  is the unique joint distribution over $\mathbf{X}$ such that $\forall X_{i} \in \mathbf{X},\ X_{i} = f_{X_{i}}(\PA_{X_{i}}, N_{X_{i}})$.
  It is often simply denoted by $p^{\cC}$.
  Let $\mathbf{x}_{-i}\vcentcolon= (x_{1}, \ldots, x_{i-1}, x_{i+1}, \ldots, x_{n})$.
  For a given $X_{i} \in \mathbf{X}$, the marginalized distribution $p^{\cC}_{X_{i}}$ given by $p^{\cC}_{X_{i}}(x_{i}) = \sum_{\mathbf{x}_{-i}} p^{\cC}_{\mathbf{X}}(\mathbf{x})$ is also referred to as \emph{entailed distribution (of $X_{i}$)}.
\end{definition}

An SCM allows us to model interventions on the system.
The idea is that an SCM represents how the values of the random variables are generated, and by intervening on a variable we are effectively changing its generating process.
Thus intervening on a variable can be modeled by modifying the structural assignment of said variable, resulting in a new SCM differing from the original only in the structural assignment of the intervened variable, and possibly introducing a new noise variable for it, in place of the old one.
Naturally, the new SCM will have an entailed distribution which is in general different from the distribution entailed by the original SCM.

The most common type of interventions are the so-called ``atomic interventions'', where one sets a variable to a chosen value, effectively replacing the distribution of the intervened variable with a point mass distribution.
In particular, this means that the intervened variable has no parents after the intervention.
This is the only type of intervention that we will need to consider in this work.
Formally\footref{fn:def} \cite{peters2017elements}:
\begin{definition}[Atomic intervention]
  Let $\cC = (\mathbf{X}, \mathbf{N}, S, p_{\mathbf{N}})$ be an SCM, $X_{i} \in \mathbf{X}$ and $x\in R_{X_{i}}$.
  The \emph{atomic intervention} $\dop(X_{i}=x)$ is the function $\cC(\mathbf{X}) \to \cC(\mathbf{X})$ given by $\cC \mapsto \cC^{\dop(X_{i} = x)}$, where $\cC^{\dop(X_{i} = x)}$ is the SCM that differs from $\cC$ only in that the structural assignment $f_{X_{i}}(\PA_{X_{i}}, N_{X_{i}})$ is replaced by the structural assignment $\tilde{f}_{X_{i}}(\tilde{N}_{X_{i}}) = \tilde{N}_{X_{i}}$, where $\tilde{N}_{X_{i}}$ is a random variable with range $R_{X_{i}}$ and\footnotemark $p_{\tilde{N}_{X_{i}}}(x_{i}) = \mathbf{1}_{x}(x_{i})$ for all $x_{i} \in R_{X_{i}}$.
\footnotetext{We denote by $\mathbf{1}_{x}$ the indicator function of $x$, so that $\mathbf{1}_{x}(x_{i}) =
  \begin{cases}
    1,&  x_{i} = x \\
    0,& \mathrm{otherwise}
  \end{cases}
  $. }
  Such SCM is called the \emph{post-atomic-intervention SCM}.
  One says that the variable $X_{i}$ was \emph{(atomically) intervened on}.
  The distribution $p^{\dop(X_{i} = x)} \vcentcolon= p^{\cC^{\dop(X_{i}=x)}}$ entailed by $\cC^{\dop(X_{i}= x)}$ is called the \emph{post-intervention distribution (w.r.t. the atomic intervention $\dop(X_{i} = x)$ on $\cC$)}.
\end{definition}

We can also define what we mean by ``$X$ having a total causal effect on $Y$''.
Following \cite{peters2017elements,pearl2009causality}, there is such a total causal effect if there is an atomic intervention on $X$ which modifies the initial distribution of $Y$\footref{fn:def} \cite{peters2017elements}:

\begin{definition}[Total Causal Effect]
  Let $X$, $Y$ be random variables of an SCM $\mathcal{C}$.
  $X$ has a \emph{total causal effect on} $Y$, denoted by $X \veryshortrightarrow Y$, if there is $x\in R_{X}$ such that $p^{\dop(X=x)}_{Y} \ne p_{Y}$.
\end{definition}

In this work, all variables of the form $X_{i}$, $Y_{i}$ or $Z_{i}$ are taken to be endogenous variables of some SCM $\mathcal{C}$.

\subsection{Entropy and Mutual Information}
\label{sec:entr-mutu-inform}

Since the quantities defined and studied in this article build upon the standard entropy and mutual information, it is important for the reader to be familiar with these.
In this subsection we will state the definitions of entropy, conditional entropy and mutual entropy.
In the interest of space, we will not try to motivate these definitions.
For a pedagogical introduction the reader is referred to \cite{thomas2006information,mackay2003information}.
We will also clarify what we precisely mean by causal control.

\begin{definition}[Entropy and Conditional Entropy \cite{thomas2006information}]
  \label{def:entr}
  Let $X$ be a discrete random variable with range $R_{X}$ and $p$ be a probability distribution for $X$.
  The \emph{entropy of $X$ w.r.t. the distribution $p$} is\footnotemark
  \footnotetext{In this article, $\log$ denotes the logarithm to the base $2$.}
  \begin{equation}
    \label{eq:entr}
    H_{X \sim p}(X) \vcentcolon= -\sum_{x\in R_{X}} p(x) \log p(x).
  \end{equation}
  Entropy is measured in $\mathrm{bit}$.
  If the context suggests a canonical probability distribution for $X$, one can write $H(X)$ and refers to it simply as the \emph{entropy of $X$}. \\
  The \emph{conditional entropy} $H(Y\mid X)$ of $Y$ conditioned on $X$ is the expected value w.r.t. $p_{X}$ of the entropy $H(Y \mid X=x)\vcentcolon=H_{Y\sim p_{Y\mid X=x}}(Y)$:%
  \begin{equation}
    \label{eq:cond-entr}
    H(Y\mid X) \vcentcolon= \E_{x\sim p_{X}} \left[ H(Y \mid X=x) \right].
  \end{equation}
\end{definition}

This means that the conditional entropy $H(Y \mid X)$ is the entropy of $H(Y)$ that remains on average if one conditions on $X$.

An essential concept closely associated with entropy is that of ``uncertainty.''
This qualitative concept is often present when interpreting information-theoretical quantities.
The entropy of a variable $X$ purports to measure the uncertainty regarding $X$.
In this paper, we use another qualitative concept called ``causal control'' (or simply ``control'').
The (causal) control that variable $X$ has over variable $Y$ is the level of uncertainty remaining about $Y$ after intervening on $X$.
It indicates how close we are to fully specifying $Y$ by intervening on $X$.
This understanding of the term ``control'' has been implicitly utilized in the philosophy of science literature \cite{pocheville2015comparing,bourrat2019variation}.

\begin{remark}
  Notice that $H(Y \mid X=x)$ is seen as a function of $x$ and the expected value in \Cref{eq:cond-entr} is taken over the random variable $x$ with distribution $p_{X}$.
  This disrespects the convention that random variables are represented by capital letters, but preserves the convention that the specific value conditioned upon (even if that value can be randomly realized --- \emph{i.e.} is a random variable) is represented by a lower case letter.
  Since we cannot respect both, we will follow the common practice and opt to use lower case letters for random variables in these cases.
\end{remark}

There are two common equivalent ways to define mutual information (often called information gain).

\begin{definition}[Mutual Information \cite{thomas2006information}]
  \label{def:mutual-information}
  Let $X$ and $Y$ be discrete random variables with ranges $R_{X}$ and $R_{Y}$ and distributions $p_{X}$ and $p_{Y}$, respectively.
  The \emph{mutual information} between $X$ and $Y$ is the KL divergence between the joint distribution $p_{X, Y}$ and the product distribution $p_{X}p_{Y}$, \emph{i.e.}:
  \begin{equation}
    \label{eq:mi-independence-form}
    I(X; Y) \vcentcolon= \!\!\!\! \sum_{x, y \in R_{X} \times R_{Y}} \!\!\!\! p_{X, Y}(x, y) \log \frac{p_{X, Y}(x, y)}{p_{X}(x) p_{Y}(y)}.
  \end{equation}
  Or equivalently:
  \begin{align}
    \label{eq:mi-entr-form}
    I(X; Y) &\vcentcolon= H(Y) - H(Y \mid X) \\
        &\phantom{:}= H(X) - H(X \mid Y) \nonumber.
  \end{align}
\end{definition}

The view of mutual information as entropy reduction from  \Cref{eq:mi-entr-form} will be the starting point for our definition of causal information gain.

\section{Running Example - Comparing Control Over an Outcome}
\label{sec:example}
We provide a simple example showcasing how the standard entropy and mutual information can fail to assess which variable gives us more control over a chosen outcome variable.
We will later (\Cref{sec:causal-inf-gain}) check that using causal entropy and causal information gain enable us to correctly make this assessment.

\begin{example}
  \label{ex:example-scm}

  Let us consider an ice-cream shop where the sales volume $Y$ on a given day can be categorized as low ($Y=0$), medium ($Y=1$), or high ($Y=2$).
  We would like to find a way to control $Y$.
  Assume that the sales volume is influenced by two factors: the temperature $W$, characterized as warm ($W=1$) or cold ($W=0$), and whether the ice-cream shop is being advertised, represented by the binary variable $X_{2}$.
  Additionally, we introduce a discrete variable $X_{1}$ to represent the number of individuals wearing shorts, which can be categorized as few ($X_{1}=0$), some ($X_{1}=1$), or many ($X_{1}=2$).
  Naturally, higher temperatures have a positive influence on the variable $X_{1}$.
  We do not consider any other variables.

  One can crudely model this situation using an SCM with endogenous variables $X_{1}, X_{2}, W$ and $Y$, as specified in \Cref{fig:example-scm}.
  The chosen structural assignments and noise distributions reflect the specific scenario where: the temperature $W$ is warm about half of the time; the number $X_{1}$ of people wearing shorts is highly determined by the weather conditions; and the ice-cream shop is advertised occasionally.
  $W$, $X_{2}$ and all noise variables of the SCM are binary variables, while $X_{1}, Y \in \{0, 1, 2\}$.
  Assume we cannot intervene on $W$.
  We would like to decide which of the variables $X_{1}$ or $X_{2}$ provide us with the most control over $Y$.
  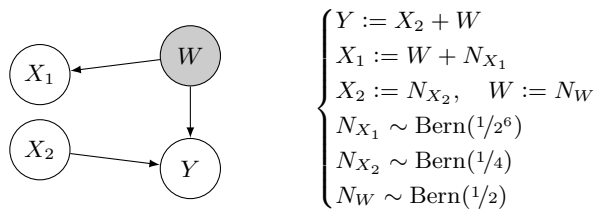
\begin{figure}[!h]
    \centering
    \begin{tikzpicture}[mynode/.style={circle,draw=black,fill=white,inner sep=0pt,minimum size=0.8cm}]
        \node[mynode] (x1) at (0,-0.25) { $X_{1}$};
        \node[mynode] (x2) at (0,-1.25) { $X_{2}$};
        \node[mynode] (y) at (2,-1.5) { $Y$};
        \node[circle,draw=black,fill=gray!40,inner sep=0pt,minimum size=0.8cm] (w) at (2,0) { $W$};
        \path [draw,->] (x2) edge[-latex] (y);
        \path [draw,->] (w) edge[-latex] (x1);
        \path [draw,->] (w) edge[-latex] (y);
        \node (assigns) at (5.7,-0.7)
          {
            $
            \begin{cases}
              Y \vcentcolon= X_{2} + W \\
              X_{1} \vcentcolon= W + N_{X_{1}} \\
              X_{2} \vcentcolon= N_{X_{2}},\quad
              W \vcentcolon= N_{W} \\
              N_{X_{1}} \sim \mathrm{Bern}(\nicefrac{1}{2^{6}}) \\
              N_{X_{2}} \sim \mathrm{Bern}(\nicefrac{1}{4}) \\
              N_{W} \sim \mathrm{Bern}(\nicefrac{1}{2}) \\
            \end{cases}
            $
          };
    \end{tikzpicture}
    \caption[Caption for SCM]{An SCM\footnotemark. It models the real-world scenario described in \Cref{ex:example-scm}, where $Y$ is the sales volume of an ice-cream shop, $W$ is the temperature, $X_{1}$ is the amount of people wearing shorts, and $X_{2}$ stands for the advertisement efforts of the ice-cream shop. The notation $N_{Z} \sim \mathrm{Bern}(q)$ signifies that the random variable $Z$ follows the Bernoulli probability distribution with parameter $q$. Grayed out variables cannot be intervened on.}
    \label{fig:example-scm}
  \addtocounter{footnote}{-1} %
  \end{figure}
  \footnotetext{The careful reader may notice that there is no noise variable $N_{Y}$ for $Y$, which seems to conflict with \Cref{def:scm}. Such apparent conflicts are resolved by seeing a deterministic assignment function such as $Y := X_{2} + W$ as having a trivial additive dependence on a noise variable $N_{Y}$ with a point mass distribution at $0$.}
  It is clear that being able to intervene on $X_{1}$ gives us no control whatsoever over $Y$.
  Any observed statistical dependence between $X_{1}$ and $Y$ comes purely from the confounder $W$.
  Consequently, interpreting a non-zero correlation or mutual information between $X_{1}$ and $Y$ as indicative of a causal connection between these variables would be a mistake, and an instance of conflation between correlation and causation.

  If we naively use the mutual information to assess whether one should intervene on $X_{1}$ or $X_{2}$ for controlling $Y$, one wrongly concludes that one should use $X_{1}$.
  Intuitively, this happens because knowing $X_{2}$ provides us with less information about $Y$ than $W$, and $X_{1}$ is very close to $W$.
  The (approximate) values can be consulted\footnotemark~in \Cref{table:mi-example}.
  \footnotetext{The details of the computations can be found in \Cref{appendix:computations_example}.}

  \begin{table}[h!]
    \caption{Information theoretical values for \Cref{fig:example-scm}.}
    \label{table:mi-example}
    \centering
    \begin{tabular}{ccccc}
      \toprule %
      $H(Y) \approx 1.41$ &$\quad$& $H(Y \mid X_{1}) \approx 0.85$ &$\quad$& $H(Y \mid X_{2}) = 1$ \\
      \midrule
      \midrule
      $I(Y ; W) \approx 0.60$ && $I(Y ; X_{1}) \approx 0.56$ && $I(Y ; X_{2}) \approx 0.41$ \\
      \bottomrule %
    \end{tabular}
  \end{table}

  Notice that $I(Y ; W) > I(Y ; X_{1})$, as it should be: $W$ has more information about $Y$ than $X_{1}$ has.
  We also see that $I(Y; X_{2}) <  I(Y ; X_{1})$.
  If mutual information were a suitable criterion for selecting the variable to intervene on, the contrary would be expected.
  In the context of our real-world scenario, intervening on the number $X_{1}$ of people wearing shorts would not be a logical approach for controlling ice cream sales.
  Instead, allocating more resources to advertising efforts (represented by $X_{2}$) would be more appropriate.

  The issue is that the mutual information $I(Y ; X_{1})$ includes the information that one has about $Y$ by \emph{observing} $X_{1}$ which flows through the confounder $W$.
  But what we want is a metric quantifying how much control we can have over $Y$ by \emph{intervening} on $X_{1}$.
  We will see that the generalization of mutual information studied in this paper (``causal information gain'') satisfies these requirements.

\end{example}

\section{Causal Entropy}
\label{sec:causal-entr}

The causal entropy of $Y$ for $X$ will be the entropy of $Y$ that is left, on average, after one atomically intervenes on $X$.
In this section we give a rigorous definition of causal entropy and study its connection to causal effect.

We define causal entropy in a manner analogous to conditional entropy (see \Cref{def:entr}).
It will be the average uncertainty one has about $Y$ if one sets $X$ to $x$ with probability $p_{X'}(x)$, where $X'$ is a new auxiliary variable with the same range as $X$ but independent of all other variables, including $X$.
In contrast with the non-causal case, here one needs to make a choice of distribution over $X'$ corresponding to the distribution over the atomic interventions that one is intending to perform.

\begin{definition}[Causal entropy, $H_{c}$]
  Let $Y$, $X$ and $X'$ be random variables such that $X$ and $X'$ have the same range and $X'$ is independent of all variables in $\cC$.
  We say that $X'$ is an \emph{intervention protocol} for $X$.

  The \emph{causal entropy} $H_{c}(Y\mid \dop(X \sim  X'))$ of $Y$ given the intervention protocol $X'$ for $X$ is the expected value w.r.t. $p_{X'}$ of the entropy $H(Y \mid \dop(X = x)) \vcentcolon= H_{Y \sim p_{Y}^{\dop(X=x)}}(Y)$ of the interventional distribution $p_{Y}^{\dop(X=x)}$.
  That is:
  \begin{equation}
    \label{eq:caus-cond-entr}
    H_{c}(Y\mid \dop(X \sim  X')) \vcentcolon= \E_{x\sim p_{X'}} \left[ H(Y \mid \dop(X=x)) \right]
  \end{equation}
\end{definition}

We will now see that, unsurprisingly, if there is no total effect of $X$ on $Y$, then the causal entropy is just the initial entropy $H(Y)$.
Perhaps more unexpectedly, the converse is not true:
it is possible to have $H_{c}(Y \mid X \sim X') = H(Y)$ while $X \veryshortrightarrow Y$.
One way this can happen is due to the non-injectivity of entropy when seen as a mapping from the set of distributions over $Y$, \emph{i.e.} it may happen that $p^{\dop(X=x)}_{Y}\ne p_{Y}$ but $H_{Y\sim p^{\dop(X=x)}_{Y}}(Y) = H_{Y \sim p_{Y}}(Y)$.

\begin{proposition}
  \label{prop:no-total-effect-implies-hc-eq-h}
  If there is no total effect of $X$ on $Y$, then $H_{c}(Y \mid \dop(X \sim X'))=H(Y)$ for any intervention protocol $X'$ for $X$.
  The converse does not hold.
\end{proposition}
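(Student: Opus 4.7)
The plan is to split the proposition cleanly into the implication and the refutation of the converse; both are short.

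\textbf{Forward direction.} I would unfold the definitions directly. Assuming $X \not\veryshortrightarrow Y$, the definition of total causal effect gives $p^{\dop(X=x)}_Y = p_{Y}$ for every $x \in R_X$, and hence $H(Y \mid \dop(X=x)) = H_{Y \sim p_Y}(Y) = H(Y)$ for every $x$. Taking the expectation over $x \sim p_{X'}$ in the defining equation of $H_c$ immediately yields $H_c(Y \mid \dop(X \sim X')) = H(Y)$. There are no subtleties here.

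\textbf{Failure of the converse.} The guiding idea, foreshadowed in the statement, is that entropy is not injective on the simplex over $R_Y$: two distinct distributions can have equal entropy, most transparently when one is a non-trivial permutation of the other. I would therefore construct a small SCM in which one of the post-intervention distributions of $Y$ is a permutation of the entailed $p_Y$, and then pick an intervention protocol $X'$ concentrated on the corresponding value of $X$.

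Concretely, take two binary endogenous variables with $X \vcentcolon= N_X$, $N_X \sim \mathrm{Bern}(1/2)$, and $Y \vcentcolon= f(X, N_Y)$ chosen so that $p^{\dop(X=0)}_Y = (2/3, 1/3)$ and $p^{\dop(X=1)}_Y = (0, 1)$ on $R_Y = \{0, 1\}$; this is easy to realise by an explicit table for $f$ over a uniform finite-range noise $N_Y$. Averaging gives the entailed distribution $p_Y = \tfrac{1}{2}(2/3, 1/3) + \tfrac{1}{2}(0, 1) = (1/3, 2/3)$, which is a non-trivial permutation of $p^{\dop(X=0)}_Y$, so $H(Y) = H(Y \mid \dop(X=0))$. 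Since $p^{\dop(X=1)}_Y \ne p_Y$, we have $X \veryshortrightarrow Y$. Setting the intervention protocol to the point mass $p_{X'} = \mathbf{1}_0$ in the definition of $H_c$ then gives $H_c(Y \mid \dop(X \sim X')) = H(Y \mid \dop(X=0)) = H(Y)$, refuting the converse.

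No step here poses a real obstacle; the only design decision is making one post-intervention distribution of $Y$ a non-trivial permutation of the entailed marginal while concentrating $X'$ on the other branch, and the Bernoulli construction above does exactly this.
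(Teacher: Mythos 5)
Your forward direction is correct and essentially identical to the paper's: absence of a total effect gives $p^{\dop(X=x)}_{Y}=p_{Y}$ for every $x\in R_{X}$, so every term inside the expectation defining $H_{c}$ equals $H(Y)$, and the expectation is $H(Y)$ for any protocol $X'$.

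The refutation of the converse, however, has a genuine gap. The implication being reversed is ``no total effect $\Rightarrow H_{c}(Y\mid \dop(X\sim X'))=H(Y)$ for \emph{every} protocol $X'$'', so its converse says: if $H_{c}(Y\mid \dop(X\sim X'))=H(Y)$ for every protocol $X'$, then $X$ has no total effect on $Y$ --- equivalently, $X \veryshortrightarrow Y$ implies $H_{c}\ne H(Y)$ for \emph{some} protocol. Since point-mass protocols are admissible (the paper's own definition of atomic intervention uses a degenerate variable with range $R_{X_i}$), falsifying this requires an SCM with $X \veryshortrightarrow Y$ but $H(Y\mid\dop(X=x))=H(Y)$ for \emph{all} $x\in R_{X}$. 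Your SCM does not falsify the converse; it satisfies it: $H(Y\mid\dop(X=1))=0\ne H(Y)$, so any protocol giving mass to $1$ (the point mass at $1$, or the uniform protocol) yields $H_{c}<H(Y)$. What you refute is only the weaker claim that equality for \emph{one} particular protocol forces absence of effect. The distinction matters downstream: the paper's proof of \Cref{cor:non-zero-Ic-means-causal-effect} needs the strong version, namely an SCM with $X\veryshortrightarrow Y$ yet $I_{c}(Y\mid\dop(X\sim X'))=0$ for every $X'$; in your example $I_{c}\ne 0$ for the point mass at $1$, so it cannot serve there. Worse, your template cannot be repaired: with $X$ exogenous and fully supported, $p_{Y}=\sum_{x}p_{X}(x)\,p^{\dop(X=x)}_{Y}$ is a mixture of the interventional distributions, and strict concavity of entropy gives $H(Y)\ge\sum_{x}p_{X}(x)\,H(Y\mid\dop(X=x))$ with equality only when all mixed distributions coincide --- i.e.\ only when there is no total effect. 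A correct counterexample must therefore prevent $p_{Y}$ from being such a mixture, which forces nontrivial causal structure. This is exactly what the paper's proof in \Cref{appendix:proof_no_total_effect_implies_hc_eq_h} does: a confounder $M$ with $M\to X$, $M\to Y$, $X\to Y$, where $Y=X$ when $M=1$ and $Y=(X+1)\bmod 2$ when $M=0$, with $N_{X},N_{M}\sim\mathrm{Bern}(q)$, $q\ne\nicefrac{1}{2}$; the two post-intervention distributions are $\mathrm{Bern}(q)$ and $\mathrm{Bern}(1-q)$ --- at least one distinct from $p_{Y}$, yet all of entropy equal to $H(Y)$ --- so $H_{c}(Y\mid\dop(X\sim X'))=H(Y)$ for \emph{every} protocol $X'$ while $X\veryshortrightarrow Y$.
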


\begin{proof}
  The proof can be found in \Cref{appendix:proof_no_total_effect_implies_hc_eq_h}. \qed
\end{proof}

If there is a total causal effect of $X$ on $Y$, there cannot be a total causal effect of $Y$ on $X$ (if $X$ is a cause of $Y$, $Y$ cannot be a cause of $X$) \cite{peters2017elements}.
This immediately yields the following corollary.

\begin{corollary}
  \label{cor:hc-invariant-if-other-is-not}
  If $H_{c}(Y \mid \dop(X \sim X')) \ne H(Y)$ for some intervention protocol $X'$ for $X$, then $H_{c}(X \mid \dop(Y \sim Y')) = H(X)$ for any intervention protocol $Y'$ for $Y$.
\end{corollary}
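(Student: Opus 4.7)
The plan is to derive the corollary as an immediate chain of two applications of \Cref{prop:no-total-effect-implies-hc-eq-h}, bridged by the acyclicity of causal effect. I will argue by contrapositive on each side of the implication.

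First, I would take the hypothesis $H_{c}(Y \mid \dop(X \sim X')) \ne H(Y)$ for some intervention protocol $X'$ for $X$ and apply the contrapositive of the first (forward) direction of \Cref{prop:no-total-effect-implies-hc-eq-h}. That proposition states that if there is no total effect of $X$ on $Y$ then causal entropy coincides with entropy for \emph{every} intervention protocol; since the conclusion fails for at least one $X'$, the premise must fail, so $X \veryshortrightarrow Y$.

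Next, I would invoke the fact (cited in the paragraph preceding the corollary and ultimately a consequence of the DAG condition on $G^{\mathcal{C}}$ in \Cref{def:scm}) that total causal effect is antisymmetric: $X \veryshortrightarrow Y$ precludes $Y \veryshortrightarrow X$. Hence there is no total effect of $Y$ on $X$.

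Finally, I would apply \Cref{prop:no-total-effect-implies-hc-eq-h} again, this time in its forward direction but with the roles of $X$ and $Y$ swapped: because $Y$ has no total effect on $X$, we obtain $H_{c}(X \mid \dop(Y \sim Y')) = H(X)$ for \emph{any} intervention protocol $Y'$ for $Y$, which is exactly the desired conclusion. There is no real obstacle here; the only subtlety worth stating cleanly is that the first step uses the proposition for a single witnessing $X'$ (its contrapositive), while the second step uses it in its universal form (the conclusion holds for all $Y'$), matching the asymmetric quantifier structure in the corollary's statement.
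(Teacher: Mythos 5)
Your proposal is correct and follows exactly the paper's own argument: apply the contrapositive of \Cref{prop:no-total-effect-implies-hc-eq-h} to get $X \veryshortrightarrow Y$, use antisymmetry of total causal effect to rule out $Y \veryshortrightarrow X$, then apply the proposition again with roles swapped. Your remark on the quantifier asymmetry (one witnessing $X'$ versus all $Y'$) is a nice clarification but does not change the substance.
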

\begin{proof}
  \label{proof:hc-invariant-if-other-is-not}
  Suppose that $H_{c}(Y \mid X \sim X') \ne H(Y)$.
  By the contrapositive of \Cref{prop:no-total-effect-implies-hc-eq-h}, this means that there is a total effect of $X$ on $Y$.
  Hence there is no total effect of $Y$ on $X$, which again by \Cref{prop:no-total-effect-implies-hc-eq-h} yields the desired result. \qed
\end{proof}

\section{Causal Information Gain}
\label{sec:causal-inf-gain}
Causal information gain extends mutual information to the causal context.
The causal information gain of $Y$ for $X$ will be the average decrease in the entropy of $Y$ after one atomically intervenes on $X$.
We start this section by giving a rigorous definition of causal information gain, and proceed to study its connection with causal effect.
We end this section by revisiting \Cref{ex:example-scm} armed with this new information theoretical quantity.
We will confirm in this example that causal information is the correct tool for assessing which variable has the most causal control over the outcome, as opposed to standard mutual information.

Recall the entropy-based definition of mutual information in \Cref{eq:mi-entr-form}.
The mutual information between two variables $X$ and $Y$ is the average reduction in uncertainty about $Y$ if one observes the value of $X$ (and vice-versa, by symmetry of the mutual information).
This view of mutual information allows for a straightforward analogous definition in the causal case, so that one can take causal information gain $I_{c}(Y \mid \dop(X\sim X'))$ to signify the average reduction in uncertainty about $Y$ if one sets $X$ to $x$ with probability $p_{X'}(x)$.

\begin{definition}[Causal Information Gain, $I_{c}$]
  \label{def:Ic}
  Let $Y$, $X$ and $X'$ be random variables such that $X'$ is an intervention protocol for $X$.
  The \emph{causal information gain} $I_{c}(Y\mid \dop(X \sim  X'))$ of $Y$ for $X$ given the intervention protocol $X'$ is the difference between the entropy of $Y$ w.r.t. its prior and the causal entropy of $Y$ given the intervention protocol $X'$.
  That is:
  \begin{equation}
    \label{eq:caus-cond-entr}
    I_{c}(Y \mid \dop(X \sim X')) \vcentcolon= H(Y) - H_{c}(Y\mid \dop(X \sim  X')).
  \end{equation}
\end{definition}

A few properties of causal information gain can be immediately gleaned from its definition.
First, in contrast with mutual information, causal information gain is \emph{not} symmetric.
Also, similarly to causal entropy, one needs to specify an intervention protocol with a distribution to be followed by interventions on $X$.

We can make use of the relation between causal entropy and causal effect to straightforwardly deduce the relation between causal information gain and causal effect.

\begin{proposition}
  \label{cor:non-zero-Ic-means-causal-effect}
    If $I_{c}(Y \mid \dop(X \sim X'))\ne 0$ for some protocol $X'$ for $X$, then $X \veryshortrightarrow Y$.
    The converse does not hold.
\end{proposition}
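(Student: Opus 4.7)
The plan is to deduce the implication directly from \Cref{def:Ic} and \Cref{prop:no-total-effect-implies-hc-eq-h}, and to disprove the converse by exhibiting an SCM in which $X$ exerts a total causal effect on $Y$ yet every intervention protocol leaves the entropy of $Y$ unchanged.

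For the forward direction, \Cref{def:Ic} rewrites the hypothesis $I_{c}(Y \mid \dop(X \sim X')) \ne 0$ as $H_{c}(Y \mid \dop(X \sim X')) \ne H(Y)$, and the contrapositive of \Cref{prop:no-total-effect-implies-hc-eq-h} then immediately yields $X \veryshortrightarrow Y$. This is essentially the same move already used in the proof of \Cref{cor:hc-invariant-if-other-is-not}.

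For the counterexample I would recycle the mechanism behind the failure of the converse of \Cref{prop:no-total-effect-implies-hc-eq-h}, namely the non-injectivity of the entropy functional: the goal is to arrange for $p_{Y}$ and every $p_{Y}^{\dop(X = x)}$ to be distinct permutations of a single non-uniform probability vector, since entropy is invariant under relabelling of outcomes. Concretely, I would take $R_{Z} = R_{X} = R_{Y} = \{0, 1, 2\}$, let $Z$ be a root endogenous variable with distribution $q = (0.5, 0.3, 0.2)$, and set $X := Z$ together with $Y := X + Z \pmod{3}$. A direct computation then gives $p_{Y}^{\dop(X = x)}(y) = q(y - x \bmod 3)$ and $p_{Y}(y) = q(2y \bmod 3)$, so all four of these distributions are pairwise distinct permutations of $q$; in particular $p_{Y}^{\dop(X=0)} \ne p_{Y}$, which establishes $X \veryshortrightarrow Y$, while $H(Y \mid \dop(X = x)) = H(q) = H(Y)$ for every $x$, so that taking expectations under any protocol $X'$ yields $I_{c}(Y \mid \dop(X \sim X')) = 0$. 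The only mildly delicate step is verifying that the four listed permutations of $q$ are genuinely distinct, which holds automatically as soon as $q$ has three different entries, as in the choice above.
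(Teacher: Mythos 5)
Your proposal is correct and takes essentially the same route as the paper: the forward implication is derived exactly as in the paper, from \Cref{def:Ic} together with the contrapositive of \Cref{prop:no-total-effect-implies-hc-eq-h}, and the converse is refuted by the same mechanism the paper relies on, namely the non-injectivity of entropy via post-intervention distributions that are permutations of one another. The only difference is cosmetic: the paper simply reduces the failure of the converse to the failure of the converse of \Cref{prop:no-total-effect-implies-hc-eq-h} (whose binary counterexample in \Cref{appendix:proof_no_total_effect_implies_hc_eq_h} already yields $X \veryshortrightarrow Y$ with $I_{c}(Y \mid \dop(X \sim X')) = 0$ for every protocol), whereas you construct a fresh, and equally valid, ternary SCM to the same effect.
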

\begin{proof}
  The implication in this proposition follows directly from \Cref{def:Ic} and the contrapositive of the implication in \Cref{prop:no-total-effect-implies-hc-eq-h}.
  The converse does not hold simply because it is equivalent to the converse of the contrapositive of the implication in \Cref{prop:no-total-effect-implies-hc-eq-h}, which also does not hold. \qed
\end{proof}

\begin{corollary}
  \label{cor:one-Ic-is-zero}
  Let $X'$ and $Y'$ be intervention protocols for $X$ and $Y$, respectively.
  At least one of $I_{c}(Y \mid \dop(X \sim X'))$ or $I_{c}(X \mid \dop(Y \sim Y'))$ is zero.
\end{corollary}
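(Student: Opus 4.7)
The plan is to argue by contraposition using Proposition~\ref{cor:non-zero-Ic-means-causal-effect} together with the acyclicity of the causal graph, exactly mirroring the argument already given for Corollary~\ref{cor:hc-invariant-if-other-is-not}. The result really just rephrases that corollary in the language of causal information gain rather than causal entropy.

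First, I would suppose for contradiction that both $I_{c}(Y \mid \dop(X \sim X')) \ne 0$ and $I_{c}(X \mid \dop(Y \sim Y')) \ne 0$. Applying Proposition~\ref{cor:non-zero-Ic-means-causal-effect} to each nonzero causal information gain, I would conclude $X \veryshortrightarrow Y$ and $Y \veryshortrightarrow X$ simultaneously. The final step is to invoke the fact, already recalled in the excerpt just before Corollary~\ref{cor:hc-invariant-if-other-is-not} (and proved in \cite{peters2017elements}), that the DAG requirement built into Definition~\ref{def:scm} forbids total causal effects in both directions between two endogenous variables. This contradiction delivers the claim.

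Alternatively, I could present it as a direct consequence of Corollary~\ref{cor:hc-invariant-if-other-is-not}: if $I_{c}(Y \mid \dop(X \sim X')) \ne 0$ then by Definition~\ref{def:Ic} we have $H_{c}(Y \mid \dop(X \sim X')) \ne H(Y)$, so Corollary~\ref{cor:hc-invariant-if-other-is-not} gives $H_{c}(X \mid \dop(Y \sim Y')) = H(X)$ for every intervention protocol $Y'$, which by Definition~\ref{def:Ic} means $I_{c}(X \mid \dop(Y \sim Y')) = 0$. Symmetrically, if $I_{c}(X \mid \dop(Y \sim Y')) \ne 0$ then $I_{c}(Y \mid \dop(X \sim X')) = 0$, so the two quantities cannot both be nonzero.

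There is no real obstacle here: the result is essentially a restatement of Corollary~\ref{cor:hc-invariant-if-other-is-not} with $H_{c}$ replaced by $I_{c}$, and the only ingredients are (i) the definitional identity $I_{c} = H - H_{c}$ and (ii) the DAG-acyclicity fact already used once in the paper. The main thing to be careful about is quantifier placement: the statement asserts that for the \emph{specific} protocols $X'$ and $Y'$ named in the hypothesis, at least one of the two scalars vanishes, so the contradiction-based version only needs acyclicity applied once, without ranging over all protocols.
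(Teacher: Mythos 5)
Your main argument (supposing both causal information gains nonzero, applying Proposition~\ref{cor:non-zero-Ic-means-causal-effect} twice, and contradicting acyclicity of the SCM) is exactly the paper's own proof, and it is correct. The alternative route via Corollary~\ref{cor:hc-invariant-if-other-is-not} and the identity $I_{c} = H - H_{c}$ is a valid minor variant but adds nothing essentially new.
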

\begin{proof}
  Suppose both $I_{c}(Y \mid \dop(X \sim X'))$ and $I_{c}(X \mid \dop(Y \sim Y'))$ are non-zero.
  Then by \Cref{cor:non-zero-Ic-means-causal-effect} we have both $X \veryshortrightarrow Y$ and $Y \veryshortrightarrow X$, which is not possible in the context of an SCM. \qed
\end{proof}

It is worth noting that the last part of \Cref{cor:non-zero-Ic-means-causal-effect} contradicts \cite{pocheville2015comparing}.
In that work, it is stated without proof that ``causation is equivalent to non-zero specificity'', wherein the term ``specificity'' coincides with what we refer to as causal information gain given a uniformly distributed intervention protocol.

\subsection{Comparison of Causal Information Gain and Mutual Information in Running Example}
\label{sec:app-macro-selec}

Consider again \Cref{ex:example-scm}.
Compare the causal entropy and causal information gain values\footnotemark in \Cref{table:hc-example} with the conditional entropy and mutual information values from \Cref{table:mi-example}.
\footnotetext{In this particular case it does not matter what intervention protocol $X'$ we choose, since $H_{c}(Y \mid \dop(X_{1} = x_{1})) = H(Y) \approx 1.41$ for all $x_{1}$ and $H_{c}(Y \mid \dop(X_{2} = x_{2})) = 1$ for all $x_{2}$.}

  \begin{table}[h!]
      \caption{Causal information theoretical values for \Cref{fig:example-scm}.}
      \label{table:hc-example}
      \centering
      \begin{tabular}{ccc}
        \toprule %
        $H_{c}(Y \mid \dop(X_{1} \sim X'_{1})) \approx 1.41 $ &$\quad$& $H_{c}(Y \mid \dop(X_{2} \sim X'_{2})) = 1$ \\
        \midrule %
        \midrule
        $I_{c}(Y \mid \dop(X_{1} \sim X'_{1})) = 0$ && $I_{c}(Y \mid \dop(X_{2} \sim X'_{2})) \approx 0.41$ \\
        \bottomrule %
      \end{tabular}
  \end{table}

  We see that using causal information gain allows us to correctly conclude that using $X_{1}$ to control $Y$ would be fruitless: intervening on $X_{1}$ does not change the entropy of $Y$.
  This is reflected by the fact that the causal information gain of $Y$ for $X_{1}$ is zero.
  Since $X_{1}$ has no causal effect on $Y$, this result was to be expected by the contrapositive of \Cref{cor:non-zero-Ic-means-causal-effect}.
  On the other hand, $X_{2}$ does provide us with some control over $Y$: intervening on $X_{2}$ decreases the entropy of $Y$ by $0.4 \text{ bit}$ on average.
  In the real-world scenario described in \Cref{ex:example-scm}, utilizing causal information gain to determine which variable to intervene on for controlling the sales volume $Y$ would lead us to make the correct decision of intensifying advertising efforts ($X_{2}$).
  Furthermore, it would enable us to conclude that manipulating the number of people wearing shorts ($X_{1}$) provides no control whatsoever over $Y$.
  Thus, causal information gain could be used in this case to assess whether statistical dependence between $Y$ and another variable in this causal system can be interpreted to have causal significance.

\section{Related Work}
\label{sec:related-work}

Previous work has aimed to provide causal explanations of machine learning models through ``counterfactual explanations'' \cite{wachter2017counterfactual,mothilal2020explaining}.
These explanations reveal what the model would have predicted under different feature values.
However, they do not offer insights into the causal significance of a feature in influencing the outcome variable.
Instead, they merely inform us about the behavior of the model itself.
In other words, counterfactual explanations inform us about the changes required for the model to produce a different prediction, but not the changes necessary for the outcome to differ in reality.
While counterfactual explanations can be useful, for instance, in advising loan applicants on improving their chances of approval \cite{mothilal2020explaining}, they fall short in providing causal interpretations for tasks such as scientific exploration \cite{zednik2022scientific}, where it is crucial to understand the actual causal relationships between features and the chosen outcome.
As discussed in \Cref{sec:introduction}, the quantities investigated in this paper can precisely address this need.

Information theoretical quantities aimed at capturing aspects of causality have been previously proposed.
An important example is the work in \cite{janzing2013quantifying}.
In that paper, the authors suggest a list of postulates that a measure of causal strength should satisfy, and subsequently demonstrate that commonly used measures fall short of meeting them.
They then propose their own measure (called ``causal influence''), which does satisfy the postulates.
Causal influence is the KL divergence of the original joint distribution and the joint distribution resulting from removing the arrows whose strength we would like to measure, and feeding noise to the orphaned nodes.
Thus although it utilizes information theory, it does not purport to generalize entropy or mutual information to the causal context.
One information-theoretical measure mentioned in \cite{janzing2013quantifying} is closer to ours.
It is called ``information flow'' \cite{ay2008information}.
Similarly to causal information gain, this quantity is a causal generalization of mutual information.
Their goal was to come up with a generalization of mutual information that would be a measure of ``causal independence'' in much the same way as standard mutual information is a measure of statistical independence.
They take the route of starting from the definition of mutual information as the KL divergence between the joint distribution and the product of the marginal distributions (\Cref{eq:mi-independence-form}), and proceed to ``make it causal'' by effectively replacing conditioning with intervening everywhere.
In contrast, we treat entropy as the main quantity of interest, and start from the definition of mutual entropy as the change in entropy due to conditioning (\Cref{eq:mi-entr-form}), and proceed to define its causal counterpart as the change in entropy due to intervening.
This then results in a quantity that is the appropriate tool for evaluating the control that a variable has over another.

The basic idea of extending the concept of mutual information to the causal context as the average reduction of entropy after intervening was introduced in the philosophy of science literature, as part of an attempt to capture a property of causal relations which they refer to as ``specificity'' \cite{griffiths2015specificity}.
This property can be thought of as a measure of the degree to which interventions on the cause variable result in a deterministic one-to-one mapping \cite{woodward2010causation}.
This means that maximal specificity of a causal relationship is attained when: (a) performing an atomic intervention on the cause variable results in complete certainty about the effect variable's value; and (b) no two distinct atomic interventions on the cause variable result in the same value for the effect variable \cite{griffiths2015specificity}.
Notice that (a) means precisely that the cause variable provides maximal causal control over the effect variable.
The causal extension of mutual information proposed in \cite{griffiths2015specificity} was named ``causal mutual information''.
They call ``causal entropy'' the average entropy of the effect variable after performing an atomic intervention on the cause variable.
Their ``causal mutual information'' is then the difference between the initial entropy of the effect variable and the causal entropy.
Although they do not say so explicitly, their definition of causal entropy assumes that one only cares about the entropy that results from interventions that are equally likely: the average of post-intervention entropies is taken w.r.t. a uniform distribution --- hence their ``causal entropy'' is the same as the causal entropy defined in this paper, but restricted to uniform intervention protocols.
This was also noted in \cite{pocheville2015comparing}, where the authors propose that other choices of distribution over the interventions would result in quantities capturing causal aspects that are distinct from the standard specificity.
In this paper we both generalized and formalized the information theoretical notions introduced in \cite{griffiths2015specificity}.
We provided rigorous definitions of causal entropy and causal information gain which allow for the use of non-uniform distributions over the interventions.
Our causal entropy can thus be seen as a generalized version of their causal entropy, while our causal information gain can be seen as a generalization of their causal mutual information\footnotemark.
\footnotetext{The term causal mutual information may be misleading given the directional nature of the relationship between cause and effect. We thus prefer the term causal information gain, drawing inspiration from the alternate name ``information gain'', which is frequently employed in discussions about decision trees when referring to mutual information.}
Armed with concrete, mathematical definitions, we are able to study key mathematical aspects of these quantities.

\section{Discussion and Conclusion}
\label{sec:disc-concl}
The motivation behind extending traditional entropy and mutual information to interventional settings in the context of interpretable machine learning (IML) arises from the necessity to determine whether the high importance assigned to specific features by machine learning models and IML methods can be causally interpreted or is purely of a statistical nature.

Information theoretical quantities are commonly used to assess statistical feature importance.
We extended these quantities to handle interventions, allowing them to capture the control one has over a variable by manipulating another.
The proposed measures, namely causal entropy and causal information gain, hold promise for the development of new algorithms in domains where knowledge of causal relationships is available or obtainable.
It is worth noting that the utility of these measures extends well beyond the field of IML, as both information-theoretical quantities and the need for causal control are pervasive in machine learning.

Moving forward, a crucial theoretical endeavor involves establishing a fundamental set of properties for the proposed causal information-theoretical measures.
This can include investigating a data processing inequality and a chain rule for causal information gain, drawing inspiration from analogous properties associated with mutual information.
Other important research directions involve the extension of these definitions to continuous variables, as well as investigating the implications of employing different intervention protocols.
Furthermore, the design and study of appropriate estimators for these measures constitute important avenues for future research, as well as their practical implementation.
Ideally, these estimators should be efficient to compute even when dealing with high-dimensional data and complex, real-world datasets.
Additionally, they ought to be applicable to observational data.
In cases where the structural causal model is known, this could be accomplished by utilizing a framework such as \textit{do}-calculus \cite{pearl2009causality} when devising the estimators.
This could allow for their application in extracting causal insights from observational data.

\appendix

\section{Computations for the running example}
\label{appendix:computations_example}

We have
\begin{align*}
  H(Y) &= p_{Y}(0) \log(\frac{1}{p_{Y}(0)}) + p_{Y}(1) \log(\frac{1}{p_{Y}(1)}) + p_{Y}(2) \log(\frac{1}{p_{Y}(2)}) \\
       &= \frac{3}{8} \log(\frac{8}{3}) + \frac{1}{2} \log(2) + \frac{1}{8} \log(8) = 2 - \frac{3}{8} \log(3) \approx 1.41 \,\mathrm{(bit)},
\end{align*}

and

\begin{align*}
  H(Y \mid W) &= H(Y \mid W=0) = \frac{3}{4} \log(\frac{4}{3}) + \frac{1}{4} \log(4) \approx 0.81 \,\mathrm{(bit)},
\end{align*}
where we used that $H(Y \mid W=0) = H(Y \mid W=1)$, so that taking the average is unnecessary.

Notice that $X_{1} = 0$ implies $W = 0$, in which case $Y = X_{2}$.
Hence $H(Y \mid X_{1} = 0) = H(Y \mid W=0) \approx 0.81 \,\mathrm{(bit)}$.
By a similar argument, $H(Y \mid X_{1}=2) = H(Y \mid W=1) \approx 0.81 \,\mathrm{(bit)}$.
Now, denote $q = \frac{1}{64}$.
It is easy to check that $p_{Y \mid X_{1}=1}(0) = \frac{3q}{4}$, $p_{Y \mid X_{1}=1}(1) = \frac{3}{4} - \frac{q}{2}$ and $p_{Y \mid X_{1}=1}(2) = \frac{1}{4} (1-q)$.
Then
\begin{equation*}
  H(Y\mid X_{1} = 1) = - \frac{3q}{4} \log(\frac{3q}{4}) - (\frac{3}{4} - \frac{q}{2}) \log(\frac{3}{4} - \frac{q}{2}) - \frac{1}{4} (1-q) \log(\frac{1}{4} (1-q)) \approx 0.89 \,\mathrm{(bit)}.
\end{equation*}

We can then compute:
\begin{align*}
  H(Y \mid X_{1}) &= p_{X_{1}}(0) \overbrace{H(Y \mid X_{1} = 0)}^{0.81} + p_{X_{1}}(1) \overbrace{H(Y \mid X_{1} = 1)}^{0.89} + p_{X_{1}}(1) \overbrace{H(Y \mid X_{1} = 2)}^{0.81} \\
                  &= \frac{1}{2} \times (1 - q) \times 0.81 + \frac{1}{2} \times 0.89 + \frac{q}{2} \times 0.81 \approx 0.85 \,\mathrm{(bit)}.
\end{align*}

We also have:
\begin{equation*}
  H(Y \mid X_{2}) = p_{X_{2}}(0) \overbrace{H(Y \mid X_{2} = 0)}^{1} + p_{X_{2}}(1) \overbrace{H(Y \mid X_{2} = 1)}^{1} = 1 \,\mathrm{(bit)},
\end{equation*}

It immediately follows that $I(Y ; W) \approx 0.60$, $I(Y ; X_{1}) \approx 0.56\,\mathrm{(bit)}$ and $I(Y ; X_{2}) \approx 0.41\,\mathrm{(bit)}$.

Moving on to the causal information theoretical quantities, we have
$H(Y \mid \dop(X_{1}=x_{1})) = H(Y) \approx 1.41 \,\mathrm{(bit)}$ for every $x_{1} \in R_{X_{1}}$ and $H(Y \mid \dop(X_{2}=x_{2})) = H(W) = 1 \,\mathrm{(bit)}$ for every $x_{2} \in R_{X_{2}}$.
Hence $H_{c}(Y \mid \dop(X_{1} \sim X_{1}')) \approx 1.41 \,\mathrm{(bit)}$ and $H_{c}(Y \mid \dop(X_{2} \sim X_{2}')) = 1\,\mathrm{(bit)}$ for any intervention protocols $X'_{1}, X'_{2}$.
It follows that $I_{c}(Y \mid \dop(X_{1} \sim X_{1}')) = 0\,\mathrm{(bit)}$ and $I(Y \mid \dop(X_{2} \sim X_{2}')) \approx 0.41\,\mathrm{(bit)}$.

\section{Proof of \Cref{prop:no-total-effect-implies-hc-eq-h}}
\label{appendix:proof_no_total_effect_implies_hc_eq_h}

\begin{proof}
  \label{proof:no-total-effect-implies-hc-eq-h}
  Suppose $X$ has no causal effect on $Y$.
  Then $\forall x\in R_{X},\  p^{\dop(X=x)}_{Y} \!\!= p_{Y}$.
  The expression for the causal entropy then reduces to $\E_{x\sim X'} H(Y) = H(Y)$.
  This shows the implication in the proposition.

  We will check that the converse does not hold by giving an example where $X$ has a causal effect on $Y$ but $H_{c}(Y \mid X \sim X') = H(Y)$.
  Consider the SCM with three binary endogenous variables $X, Y$ and $M$ specified by:
  \begin{equation}
    \begin{cases}
      f_{M}(N_{M}) = N_{M} \\
      f_{X}(M, N_{X}) =
      \begin{cases}
        (N_{X} + 1) \mod 2, M=1 \\
        N_{X}, M=0
      \end{cases} \\
      f_{Y}(M, N_{X}) =
      \begin{cases}
        X, M=1 \\
        (X + 1) \mod 2, M=0
      \end{cases} \\
      N_{X}, N_{M} \sim \mathrm{Bern}(q),
  \text{ for some }q \in (0,1).
    \end{cases}
  \end{equation}
  Then $p_{Y}^{\dop(X=0)} \sim \mathrm{Bern}(q)$ and $p_{Y}^{\dop(X=1)} \sim \mathrm{Bern}(q)$.
  Also,
  \begin{equation}
    p_{Y} = p_{X \mid M=1}(1) p_{M}(1) + p_{X \mid M=0}(0) p_{M}(0) = 1-q \quad \Rightarrow \quad Y \sim \mathrm{Bern}(1-q)
  \end{equation}
  Hence $p_{Y} \ne p_{Y}^{\dop(X=1)}$, meaning that $X \veryshortrightarrow Y$.
  And since both post-intervention distributions have the same entropy $H_{Y\sim \mathrm{Bern}(q)}(Y) = H_{Y\sim \mathrm{Bern}(1-q)}(Y)$, then the causal entropy will also be $H_{c}(Y \mid X \sim X') = H_{Y\sim \mathrm{Bern}(1-q)}(Y) = H(Y)$ (for any chosen of $X'$). \qed
\end{proof}

\bibliographystyle{splncs04}
\bibliography{ecai23library.bib}
\end{document}